\newtheorem{Language}{Language}
\newtheorem{principle}[Language]{Principle}
\newtheorem{notation}{Notation}[section]
\newtheorem{formalization}{Formalization}
\DeclareMathOperator{\CM}{CM}
\DeclareMathOperator{\causal}{Causal}
\DeclareMathOperator{\comp}{comp}
\DeclareMathOperator{\head}{head}
\DeclareMathOperator{\body}{body}
\DeclareMathOperator{\graph}{graph}
\DeclareMathOperator{\Pa}{Pa}
\DeclareMathOperator{\Error}{Error}
\begin{document}
\title{How Rules Represent Causal Knowledge: \\ Causal Modeling with Abductive Logic Programs}
\titlerunning{Causal Modeling with Abductive Logic Programs}
%
\author{Kilian Rückschloß\inst{1}\orcidID{0000-0002-7891-6030} \and \\
Felix Weitkämper\inst{2,3}\orcidID{0000-0002-3895-8279} 
}
\authorrunning{K.~Rückschloß and F.~Weitkämper}
%
\institute{Eberhard-Karls-Universität Tübingen, Auf der Morgenstelle 10,
72076 Tübingen  \and
German University of Digital Science, Marlene-Dietrich-Allee 14, 14482 Potsdam \and
Ludwig-Maximilians-Universit\"at M\"unchen, Oettingenstr.~67, 80538 M\"unchen}
\maketitle              
\begin{abstract}
Pearl observes that causal knowledge enables predicting the effects of interventions, such as actions, whereas descriptive knowledge only permits drawing conclusions from observation.
This paper extends Pearl’s approach to causality and interventions to the setting of stratified abductive logic programs. It shows how stable models of such programs can be given a causal interpretation by building on philosophical foundations and recent work by Bochman and Eelink et al.
In particular, it provides a translation of abductive logic programs into causal systems, thereby clarifying the informal causal reading of logic program rules and supporting principled reasoning about external actions. The main result establishes that the stable model semantics for stratified programs conforms to key philosophical principles of causation, such as causal sufficiency, natural necessity, and irrelevance of unobserved effects.
This justifies the use of stratified abductive logic programs as a framework for causal modeling and for predicting the effects of interventions.
\keywords{Causal Logic \and Stable Model Semantics \and Abductive Logic Programming \and Interventions \and Do-Calculus.}
\end{abstract}
\section{Introduction}
After being a central topic of philosophical inquiry for over two millennia, causality entered the mainstream of artificial intelligence research through the work of Pearl~\cite{Causality}.  
A key feature of his account is that causal knowledge goes beyond descriptive knowledge in the questions it can address: while descriptive knowledge permits only inferences from observations, causal knowledge enables reasoning about the effects of external interventions such as actions on the modeled system.

\begin{example}
    Consider a road that passes through a field with a sprinkler. Assume the sprinkler is turned on by a weather sensor when it is sunny. Suppose further that it rains whenever it is cloudy, and that the road becomes wet if either it rains or the sprinkler is activated. Finally, assume that a wet road is dangerous. 
    
    Observing that the sprinkler is on, one might conclude that the weather is sunny. However, actively intervening and switching the sprinkler on does not cause the weather to become sunny. To predict the effect of such an intervention, one needs causal---not merely descriptive---knowledge.
    \label{example - introduction running example}
\end{example}

Since evaluating the effects of possible actions is one of the primary motivations for modeling in the first place, this has paved the way for the adoption of causal frameworks across a wide variety of domains~\cite{ArifM22,GaoZWFHL24,WuPLZSLQLG24}.

Pearl~\cite{Causality}, however, develops his theory of causality exclusively within his own formalisms: Bayesian networks and structural causal models.

\begin{example}
    Recall Example~\ref{example - introduction running example}, and denote by $c$ the event that the weather is cloudy, by $s$ the event that the sprinkler is on, by $r$ the event of rain, by $w$ the event that the road is wet, and by $d$ the event that the road is dangerous.

    Pearl~\cite{Causality} models the causal mechanisms as a system of structural equations:
    \begin{align}
        & r := c && s := \neg c && w := r \lor s && d := w
        \label{structural equations -  sprinkler}
    \end{align}
    Since the mechanisms do not specify whether it is cloudy, Pearl~\cite{Causality} treats $c$ as an external variable or error term.  
    The solutions of the corresponding causal model~$\mathcal{M}$ are obtained by solving Equations~\eqref{structural equations -  sprinkler} for $c = \top$ and $c = \bot$.  
    If the sprinkler is observed to be on, then according to Equations~\eqref{structural equations -  sprinkler}, it must be sunny.

    The intervention of manually switching the sprinkler on is represented by modifying the causal mechanism for the sprinkler so that it is on regardless of the weather. This is captured in the following structural equations:
    \begin{align}
        & r := c && s := \top && w := r \lor s && d := w
        \label{structural equations after intervention -  sprinkler}
    \end{align}
    Again, $c$ is considered an external variable, and the solutions of the corresponding causal model~$\mathcal{M}_{s}$ represent the possible states of the world after the intervention of switching the sprinkler on manually.  
    Note that $s$ is now true independently of the weather.
    \label{example - sprinkler causal model}
\end{example}

In philosophy, the idea that causal explanations are given by rules of the form ``$\phi$ \emph{causes} $\psi$'' is well established, for instance, in René Descartes’ \emph{Principles of Philosophy} II:37 (see the translation by Miller and Miller~\cite{Miller2009}).  
This makes abductive logic programming~\cite{KakasM90} a natural target formalism for representing causal knowledge.  
An abductive logic program consists of a set of rules~$\textbf{P}$ and a set of propositions~$\mathfrak{A}$, called \emph{abducibles}.  
Similar to external variables in Pearl’s causal models, abducibles are independently assumed to be either true or false.  
Rückschloß and Weitkämper~\cite{ruckschloss2021exploiting} apply Clark’s completion~\cite{ClarkCompletion} to relate probabilistic logic programming to Pearl’s theory of causality, thereby transferring Pearl’s notion of intervention.  
Similarly, abductive logic programs can be translated into causal models, enabling a principled treatment of interventions.

\begin{example}
    The situation in Example~\ref{example - introduction running example} gives rise to the rules:
    \begin{align}
        & r \leftarrow c && s \leftarrow \neg c && w \leftarrow r && w \leftarrow s && d \leftarrow w
        \label{rules - sprinkler}
    \end{align}
    Since $c$ can be either true or false, it is considered an abducible; that is, $\mathfrak{A} := \{ c \}$.  
    The Clark completion~\cite{Clark} yields that the models of the resulting abductive logic program are exactly the solutions of the causal model~$\mathcal{M}$ in Example~\ref{example - sprinkler causal model}.

    Intervening and switching the sprinkler on manually results in the rules:
    \begin{align}
        & r \leftarrow c && s \leftarrow \top && w \leftarrow r && w \leftarrow s && d \leftarrow w
        \label{rules - sprinkler intervention}
    \end{align}
    Again, Clark completion~\cite{Clark} yields that the models of the resulting abductive logic program coincide with the solutions of the causal model~$\mathcal{M}_s$ in Example~\ref{example - sprinkler causal model}. 
    \label{example - sprinkler abductive logic program}
\end{example}

Notably, this approach provides an informal semantics for abductive logic programs, where each rule $h \leftarrow b_1, \ldots, b_n$ is interpreted as ``$b_1 \land \ldots \land b_n$ \emph{causes}~$h$''.  
However, translating abductive logic programs into causal models can lead to counterintuitive results when cyclic causal relations are involved.

\begin{example}
    Assume $h_1$ and $h_2$ are two neighboring houses. Let $f_i$ denote the event that House~$h_i$ is on fire, and $sf_i$ the event that House~$h_i$ starts burning, for~${i = 1,2}$.  
    It is reasonable to assume that a fire in House~$h_1$ leads to a fire in House~$h_2$, and vice versa.  

    This situation can be modeled by the cyclic abductive logic program $\mathcal{P}$, consisting of the abducibles $\mathfrak{A} := \{ sf_1, sf_2 \}$ and the rules:
    \begin{align}
        & f_1 \leftarrow sf_1 && f_2 \leftarrow sf_2 && f_2 \leftarrow f_1 && f_1 \leftarrow f_2
        \label{rules - houses}
    \end{align}

    Proceeding as in Example~\ref{example - sprinkler abductive logic program} and relating $\mathcal{P}$ to a causal model, the program admits the model $\omega := \{ f_1, f_2 \}$.  
    In $\omega$, both houses are on fire, even though neither house actually started to burn.  
    This contradicts the intuition that houses do not spontaneously catch fire merely because they affect each other.
    \label{example - houses abductive logic program}
\end{example}

This work extends the applicability of Pearl’s ideas to stratified abductive logic programs that involve cyclic causal relations. Building on prior work by Bochman~\cite{bochman} and Eelink et al.~\cite{EelinkRW25}, it connects logic programming and Pearl’s causal models to the philosophical Principles~\ref{principle - causal foundation}--\ref{principle - non-interference} stated below. Theorem~\ref{theorem - bochman transformation} translates abductive logic programs into the causal systems of Eelink et al.~\cite{EelinkRW25}, thereby relating the stable model semantics to Principles~\ref{principle - causal foundation}, \ref{principle - Aquinas}, and~\ref{principle - Leibniz}. Theorem~\ref{theorem - non-interference} shows that Principle~\ref{principle - causal irrelevance} implies~\ref{principle - non-interference}. Finally, Theorem~\ref{theorem - stratified programs satisfy causal irrelevance} establishes that stratified programs satisfy Principle~\ref{principle - causal irrelevance}. 

Overall, the results show that stratified abductive logic programs under the stable model semantics conform to these principles, supporting their use in causal modeling and the prediction of effects from external interventions.

\begin{principle}[Causal Foundation]
     \textit{Causal explanations} originate from external premises, whose explanations lie beyond the scope of the given model. 
     \label{principle - causal foundation}
\end{principle}

\begin{principle}[Natural Necessity]
    ``\ldots given the existence of the cause, the effect must necessarily follow.'' (Thomas Aquinas: \emph{Summa Contra Gentiles} II:35.4; translation by Anderson~\cite{Anderson})
    \label{principle - Aquinas}
\end{principle}  

\begin{principle}[Sufficient Causation]
    ``\ldots there is nothing without a reason, or no effect without a cause.''  
    (Gottfried Wilhelm Leibniz: \emph{First Truths}; translation by Loemker~\cite{Leibniz}, p.~268)
    \label{principle - Leibniz}
\end{principle}

 \begin{principle}[Causal Irrelevance \cite{Williamson2001}]
        Unobserved effects have no influence on beliefs.
         \label{principle - causal irrelevance}
 \end{principle}

 \begin{principle}[Non-Interference]
     The impact of interventions is restricted to the causal direction from causes to effects.    
     \label{principle - non-interference}
 \end{principle}

\section{Preliminaries}

Here, we present the material on which this contribution builds, namely Pearl's causal models~\cite{Causality}, abductive logic programs~\cite{KakasM90}, and the logical theories of causality developed by Bochman~\cite{bochman} and Eelink et al.~\cite{EelinkRW25}.

\subsection{Pearl's Causal Models}
\label{subsec: Pearl's Causal Models}
Pearl \cite{Causality} suggests modeling causal relationships with deterministic functions. This leads to the following definition of structural causal models. 

\begin{definition}[Causal Model {\cite[§7.1.1]{Causality}}]
A \textbf{(structural) causal model}~$\mathcal{M}$ with \textbf{internal variables}~$\textbf{V}$ and \textbf{external variables}~$\textbf{U}$ is a system of equations that includes one \textbf{structural equation} of the form
$
X := f_X(\Pa(X), \Error(X))
$
for each internal variable~$X \in \textbf{V}$. Here, the \textbf{parents}~$\Pa(X) \subseteq \textbf{V}$ of~$X$ are a subset of internal variables, the \textbf{error term}~$\Error(X) \subseteq \textbf{U}$ is a subset of external variables, and~$f_X$ is a function that maps each assignment of values to~$\Pa(X)$ and~$\Error(X)$ to a value of~$X$.

A \textbf{solution}~$\omega$ of the structural causal model~$\mathcal{M}$ is an assignment of values to the variables in~$\textbf{V} \cup \textbf{U}$ that satisfies all structural equations.
\label{definition - structural causal model}
\end{definition}

\begin{notation}
    The parents $\Pa(V)$ and error terms $\Error(V)$ of an internal variable ${V \in \textbf{V}}$ are typically evident from its defining function~$f_V$.  
    Accordingly, we omit explicit references to $\Pa(\cdot)$ and $\Error(\cdot)$ throughout this work.
\end{notation}

\begin{example}
The causal model $\mathcal{M}$ in Example \ref{example - sprinkler causal model} has  external variables $\textbf{U} := \{ c \}$, internal variables~${\textbf{V} := \{ r,s,w,d \}}$ and Structural Equations (\ref{structural equations -  sprinkler}). 
and solutions: 
 \begin{align*}
     & \omega_1: && c = \top && r = \top && s = \bot && w = \top && d = \top \\
     & \omega_2: && c = \bot && r = \bot && s = \top && w = \top && d = \top
\end{align*}
\label{example - sprinkler deterministic}
\end{example}

In artificial intelligence, causal models are particularly valuable because they can represent external interventions.  
As explained in Chapter 7 of Pearl~\cite{Causality}, the key idea is to construct a modified model that incorporates the minimal changes to the structural equations required to enforce an external intervention.

\begin{definition}[Modified Causal Model]
    Fix a causal model $\mathcal{M}$.  
    Let~${\mathbf{I}}$ be a subset of internal variables with a value  assignment $\mathbf{i}$.  
    The \textbf{modified model} or \textbf{submodel} $\mathcal{M}_{\mathbf{i}}$ is the model obtained from $\mathcal{M}$ by replacing, for each variable~${X \in \mathbf{I}}$, the structural equation
    $
        X := f_X(\Pa(X), \Error(X))
    $
    with~$
        {X := \mathbf{i}(X)}.
    $
    \label{definition - modified causal model}
\end{definition}

\begin{notation}
    Let $V \in \textbf{V}$ be a Boolean internal variable of a structural causal model~${\mathcal{M}}$. In this case, we write $\mathcal{M}_V := \mathcal{M}_{V := \top}$ and $\mathcal{M}_{\neg V} := \mathcal{M}_{V := \bot}$.
\end{notation}

\begin{example}
    The causal model $\mathcal{M}_s$ from Example~\ref{example - sprinkler causal model} is the modified model corresponding to the value assignment $s := \top$.  
    It has the following solutions:
    \begin{align*}
        & \omega_1: && c = \top, && r = \top, && s = \top, && w = \top, && g = \top \\
        & \omega_2: && c = \bot, && r = \bot, && s = \top, && w = \top, && g = \top
    \end{align*}
    These represent the possible states of the system after manually switching on the sprinkler.
    \label{example - sprinkler deterministic intervention}
\end{example}

As in Example~\ref{example - sprinkler causal model}, actions often force a variable in a causal model to take on a new value. Pearl~\cite{Causality} emphasizes that submodels~$\mathcal{M}_{\textbf{i}}$ typically arise from performing actions that set certain variables to specific values, a process formalized by the introduction of the \emph{do}-operator.

\subsection{Abductive Logic Programming}
\label{subsec: Abductive Logic Programming}
This work adopts standard notation for propositions, (propositional) formulas, and structures. A structure is identified with the set of propositions that are true in it. The term \emph{world} refers to a consistent set of literals that is maximal with respect to inclusion. Since worlds~$\omega$ are in one-to-one correspondence with structures, the two terms are used interchangeably.

\begin{example}
    \label{example - propositional alphabet}
        A structure $\omega$ in the propositional alphabet  $      
    \mathfrak{P} := \{ c,~r,~s,~w,~d \}
        $ of Example \ref{example - sprinkler deterministic} is a complete state description such as $\omega_1$ in Example \ref{example - sprinkler deterministic}. 
    We identify the structure $\omega_1$ with the set of propositions $\{ c,~r,~w,~d\}$.
    \label{example - propositional structure}

\end{example}

Fix a propositional alphabet $\mathfrak{P}$. Logic programs consist of rules or clause.  

\begin{definition}[Clauses and Logic Programs]
    A \textbf{(normal) clause} $C$ is a formula of the form $
    (h \leftarrow (b_1 \land (b_2 \land ( ... \land b_n))...)),
    $ which we also denote as~${
    h \leftarrow b_1 \land ... \land b_n}$,
    $ h \leftarrow b_1,..., b_n
    \text{ or }
    \head (C) \leftarrow \body (C)
    $.
    Here, ${\head(C) := h}$ is an atom, referred to as the \textbf{head} of the clause~$C$ and~${\body(C) := \{ b_1,...,b_n \}}$ is a finite set of literals, known as the \textbf{body} of~$C$. If~${\body (C) = \emptyset}$ and  $C = (h \leftarrow \top)$, one denotes $C$ by~$h$ and calls $C$ a \textbf{fact}. 


   A \textbf{logic program}~$\textbf{P}$ is a finite set of clauses.  
   The \textbf{dependence graph} of~$\textbf{P}$ is the directed graph over the alphabet~$\mathfrak{P}$ defined as follows:  
   there is an edge $p \rightarrow q$ if and only if there exists a clause $C \in \textbf{P}$ such that $\head(C) = q$ and $\body(C) \cap \{ p, \neg p \} \neq \emptyset$. It is denoted by~$\graph(\textbf{P})$.
   
   An edge $p \stackrel{-}{\rightarrow} q$ in $\graph(\textbf{P})$ is \textbf{negative} if there exists a clause $C \in \textbf{P}$ such that $\head(C) = q$ and $\neg p \in \body(C)$.  
   Similarly, an edge $p \stackrel{+}{\rightarrow} q$ is \textbf{positive} if there exists a clause $C \in \textbf{P}$ such that $\head(C) = q$ and $p \in \body(C)$.  
   Note that an edge may be both negative and positive simultaneously.

   A \textbf{cycle} in $\graph(\textbf{P})$ is a finite alternating sequence of nodes and edges of the form 
   ${
       q \rightarrow p_1 \rightarrow p_2 \rightarrow \dots \rightarrow p_n \rightarrow q
   }$
   that begins and ends at the same node~$q$.  
   The program~$\textbf{P}$ is \textbf{acyclic} if its dependence graph~$\graph(\textbf{P})$ contains no cycle.  
   It is \textbf{stratified} if its dependence graph does not contain a cycle with a negative~edge.
\end{definition}

Clark \cite{ClarkCompletion} translates acyclic programs $\textbf{P}$ to propositional formulas, stating that a valid proposition in a model~$\omega$ needs to have a reason, i.e., a support in~$\omega$.

\begin{definition}[Clark Completion, Supported Model Semantics]
Let $\textbf{P}$ be a logic program. The \textbf{Clark completion} of $\textbf{P}$ is the set of formulas
$$
\comp (\textbf{P}) := 
\left\{ 
p \leftrightarrow \bigvee_{\substack{C \in \textbf{P},~\head (C) = p}} \bigwedge \body (C) 
\right\}_{ p \in \mathfrak{P}}.
$$
A \textbf{supported model} of $\textbf{P}$ is model of the Clark completion $\omega \models \comp (\textbf{P})$. 
\end{definition}

\begin{example}
    Rules (\ref{rules - sprinkler}) define a logic program $\textbf{P}$ whose unique supported model is $\omega_2$, as given in Example~\ref{example - sprinkler deterministic}. Rules (\ref{rules - houses}) yield a stratified program.
    \label{example - logic program sprinkler}
\end{example}

Although the supported model semantics is formally well-defined for general propositional logic programs, i.e., it associates a unique (possibly empty) set of models to each program $\textbf{P}$, it yields counterintuitive results for cyclic programs.

\begin{example}
    Rules (\ref{rules - houses}) define a logic program $\textbf{P}$ with two supported models: ${\omega_1 := \emptyset}$ and $\omega_2 := \{ f_1, f_2 \}$.  
    In $\omega_2$, both houses are on fire, even though there is no initial cause for either to start burning -- contradicting everyday intuition.
    \label{example - supported models of houses}
\end{example}

For general, potentially cyclic programs, Gelfond and Lifschitz~\cite{StableModelSemantics} argue that the stable model semantics provides a more appropriate notion of a model.  
Rather than adopting the more common formulation via reducts, this work follows the equivalent definition based on \emph{unfounded sets}, originally introduced by Saccà and Zaniolo~\cite{SaccaZ90} and listed as Definition~D in Lifschitz~\cite{Lifschitz10}.
 
\begin{definition}[Unfounded Sets and Stable Models]
Let $\omega$ be a structure,~${I \subseteq \omega}$ a non-empty subset of the set of atoms that are true in $\omega$, and~$\textbf{P}$ a logic program.  
Then, $I$ is an \textbf{unfounded set} with respect to $\omega$ and $\textbf{P}$ if, for each $p \in I$, every rule in $\textbf{P}$ with head $p$ has some body literal $b$ that is either not true in $\omega$ or belongs to $I$.

A structure $\omega$ is a \textbf{stable model} of $\textbf{P}$ if it satisfies every clause of $\textbf{P}$ when interpreted as a propositional formula and if there is no unfounded set $I \subseteq \omega$ with respect to $\omega$ and $\textbf{P}$.
\end{definition}

\begin{example}
    In Example~\ref{example - supported models of houses}, the only stable model is~$\omega_1$, as intended.  
    The set~$\omega_2$ is not stable because it is unfounded with respect to itself and the program~$\textbf{P}$.
\end{example}

Gelfond and Lifschitz \cite{StableModelSemantics} prove the following results:

\begin{theorem}[Supported and Stable Models]
    Every stable model of a logic program is also a supported model~$\square$.
    \label{theorem - consistency of Clark completion}
\end{theorem}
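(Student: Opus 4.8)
The plan is to prove the statement by directly unfolding the definition of the Clark completion and establishing, for every proposition $p \in \mathfrak{P}$, the biconditional
\[
\omega \models p \leftrightarrow \bigvee_{\substack{C \in \textbf{P},~\head(C) = p}} \bigwedge \body(C),
\]
where $\omega$ is an arbitrary stable model of $\textbf{P}$. Each biconditional splits into two implications, which I would handle separately by invoking the two defining conditions of a stable model: that $\omega$ satisfies every clause of $\textbf{P}$ read as a propositional formula, and that no non-empty subset of $\omega$ is unfounded with respect to $\omega$ and $\textbf{P}$.

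For the right-to-left direction, suppose the disjunction holds in $\omega$. Then some clause $C \in \textbf{P}$ with $\head(C) = p$ satisfies $\omega \models \bigwedge \body(C)$. Since $\omega$ is a stable model, it satisfies $C$ read as the implication $p \leftarrow \body(C)$, and hence $\omega \models p$. This direction uses only clause satisfaction and makes no appeal to unfoundedness.

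The left-to-right direction is where the unfounded-set condition does the work. Suppose, toward a contradiction, that $\omega \models p$ while the disjunction fails, that is, every clause $C \in \textbf{P}$ with $\head(C) = p$ has at least one body literal that is not true in $\omega$. I would then consider the singleton $I := \{ p \}$, which is a non-empty subset of $\omega$. By assumption, for its only element $p$, every rule with head $p$ contains a body literal that is not true in $\omega$, which is exactly the condition for $I$ to be unfounded with respect to $\omega$ and $\textbf{P}$. This contradicts the stability of $\omega$, so the disjunction must hold after all.

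I do not expect a genuine obstacle here, since the argument is essentially a definition-chase; the only points demanding care are the degenerate case in which $p$ has no defining clause---so that the disjunction is the empty disjunction $\bot$ and the unfounded-set condition on $\{p\}$ holds vacuously---and the correct reading of ``not true in $\omega$'' for a negative body literal $\neg q$, which must be taken to mean $q \in \omega$. Both are subsumed by the singleton-unfounded-set argument above, so no separate case analysis is required.
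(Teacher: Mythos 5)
Your proof is correct, but there is nothing in the paper to compare it against: the paper states this theorem with a terminal $\square$ and attributes it to Gelfond and Lifschitz, i.e., it is cited from the literature rather than proved. What your argument adds is a self-contained proof that works entirely from the paper's own choice of definition --- the unfounded-set characterization of stability (Sacc\`a--Zaniolo, Lifschitz's Definition~D) --- rather than from the reduct-based definition that the original Gelfond--Lifschitz proof uses. The key device, instantiating the unfounded-set condition with the singleton $I := \{p\}$, is exactly right: if $p \in \omega$ but every clause $C \in \textbf{P}$ with $\head(C) = p$ has a body literal false in $\omega$, then $\{p\}$ satisfies the unfoundedness condition (the disjunct ``or belongs to $I$'' is simply not needed), contradicting stability; and the converse direction correctly uses only that $\omega$ satisfies each clause as a propositional implication. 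Your two flagged edge cases are also handled soundly: when $p$ heads no clause the disjunction in $\comp(\textbf{P})$ is empty, the unfoundedness condition on $\{p\}$ holds vacuously, and so stability forces $p \notin \omega$, matching $p \leftrightarrow \bot$; and ``not true in $\omega$'' for a literal $\neg q$ indeed means $q \in \omega$. The only stylistic remark is that your proof, being tied to the unfounded-set definition, is the natural one for this paper's presentation, whereas a reader consulting the cited source would see a structurally different argument via the reduct fixpoint; the two are equivalent precisely because the two definitions of stable model are.
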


\begin{theorem}[Stable Models of Stratified Programs]
    Every stratified program has a unique stable model.~$\square$
    \label{theorem - stable model semantics of stratified programs}
\end{theorem}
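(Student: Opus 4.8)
The plan is to prove existence and uniqueness together by building the stable model one stratum at a time, the classical argument for this result. First I would derive from the absence of negative cycles a \emph{stratification}: a level function $\ell : \mathfrak{P} \to \{1, \ldots, k\}$ with $\ell(p) \le \ell(\head(C))$ for every atom $p$ occurring in $\body(C)$, and $\ell(p) < \ell(\head(C))$ whenever $\neg p \in \body(C)$. Passing to the condensation of $\graph(\textbf{P})$ into strongly connected components, I would observe that every negative edge must join two distinct components, since any edge inside a component lies on a cycle and by hypothesis no cycle carries a negative edge. The condensation is acyclic, so assigning to each component the maximum number of negative edges on any directed path of the condensation ending at it yields a value that is non-decreasing along positive edges and strictly increasing along negative ones; reading $\ell$ off this assignment gives the desired stratification.

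Next I would construct a canonical structure $M$ by induction on strata. Let $\textbf{P}_i$ collect the clauses whose head lies in stratum $i$, set $M_0 := \emptyset$, and given $M_{i-1}$ define $M_i$ as follows: in each clause of $\textbf{P}_i$ every negative literal $\neg p$ satisfies $\ell(p) < i$, so its truth value is already fixed by $M_{i-1}$; deleting the clauses whose negative body is falsified by $M_{i-1}$ and erasing the remaining (satisfied) negative literals turns $\textbf{P}_i$ into a \emph{definite} program over the atoms of stratum $i$, with the true lower-stratum atoms acting as facts. Such a definite program has a unique least model, the least fixpoint of its immediate-consequence operator; I take $M_i$ to be $M_{i-1}$ together with this least model and put $M := M_k$. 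By construction $M$ satisfies every clause of $\textbf{P}$, so it is a model. I would then verify that $M$ is stable, i.e. has no nonempty unfounded subset $I \subseteq M$: the fixpoint iteration equips each true atom with a derivation rank, and ordering atoms lexicographically by stratum and then by rank, a minimal $p \in I$ would be derived by a rule whose negative literals point strictly below, whose positive literals are either from strictly lower strata or of strictly smaller rank, and whose whole body is true in $M$; by minimality none of these body atoms lies in $I$, contradicting the defining property of an unfounded set.

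For uniqueness I would argue by induction on strata that any stable model $\omega$ agrees with $M$ on $\bigcup_{j \le i} S_j$. Assuming agreement below stratum $i$, a rank induction using only the model condition forces every atom of the least model of the reduced $\textbf{P}_i$ to be true in $\omega$, so $M_i$ and $\omega$ coincide on the atoms of stratum $i$ that are true in $M$. Conversely, the set $J$ of atoms in $S_i$ that are true in $\omega$ but absent from $M_i$ must be empty: were it nonempty, each $p \in J$ could be supported in $\omega$ only by a rule whose remaining positive body atoms again lie in $J$ (otherwise the reduced clause would already have fired in the fixpoint and placed $p$ in $M_i$), so $J$ would be unfounded with respect to $\omega$, contradicting stability of $\omega$. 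Hence $\omega = M$.

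The main obstacle is the bookkeeping that connects the least-fixpoint construction to the \emph{unfounded-set} formulation of stability adopted here, rather than the reduct formulation: one must show both that fixpoint foundedness precludes any unfounded subset of $M$ and, conversely, that every atom not reached by the fixpoint is unfounded in any model, the two directions that pin down $M$ uniquely. Both hinge on the single observation that, once the lower strata are fixed, each stratum behaves like a definite program whose minimal model is exactly its founded part, so that being true-and-supported coincides with being derivable by the finite, well-ranked iteration.
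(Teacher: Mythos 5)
Your proof is correct, but note that the paper itself contains no proof of this statement: it is quoted as a known result of Gelfond and Lifschitz~\cite{StableModelSemantics}, which is why it is stated with a closing~$\square$. What you have written is essentially the classical argument behind that citation --- extracting a level function from the absence of negative cycles via the condensation into strongly connected components, building the canonical model $M$ by iterated least fixpoints stratum by stratum, and then showing $M$ is the unique stable model. The genuinely non-routine part, which you flag yourself, is that the paper adopts the unfounded-set characterization of stability (after Saccà and Zaniolo~\cite{SaccaZ90}) rather than the reduct formulation, and you carry out both required translations correctly: the lexicographic (stratum, rank) minimality argument showing that $M$ admits no nonempty unfounded subset, and the converse argument that the stratum-$i$ atoms true in an arbitrary stable model $\omega$ but missed by the fixpoint would themselves form an unfounded set, forcing $\omega$ to agree with $M$. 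A point worth making explicit, which you use implicitly and correctly, is that an unfounded set consists of atoms only, so a negative body literal can block a rule solely by being false in $\omega$, never by ``belonging to $I$''; this is what makes both of your reductions go through. Your argument is thus a sound, self-contained replacement for the paper's citation.
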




Abductive logic programming was identified as a distinct branch of logic programming by Kakas and Mancarella \cite{KakasM90}, with the goal of providing an explanation for a given set of observations in terms of so-called abducibles. 

\begin{definition}[Abductive Logic Program \cite{AbductiveLogicPrograms}]
    An \textbf{integrity constraint}~$\textit{IC}$ is an expression of the form $\bot \leftarrow b_1 \land ... \land b_n$ also written $\bot \leftarrow \body (IC)$, where~$\body (\textit{IC})$ is a finite set of literals.  
    
    An \textbf{abductive logic program} is a triplet $\mathcal{P} := (\textbf{P},\mathfrak{A},\textbf{IC})$ consisting of a logic program~$\textbf{P}$, a finite set of integrity constraints $\textbf{IC}$ and a set of \textbf{abducibles}~$\mathfrak{A} \subseteq \mathfrak{P}$ such that no abducible~${u \in \mathfrak{A}}$ is the head of a clause in $\textbf{P}$. Finally, $\mathcal{P}$ is \textbf{acyclic} or \textbf{stratified} if the underlying logic program $\textbf{P}$ is. 
    \label{definition - abductive logic program}
\end{definition}

In the context of databases, integrity constraints serve as sanity checks on data~\mbox{\cite[Chapter~9]{databases}}.  
In our setting, they are used to represent observations; that is, they ensure that the knowledge encoded by the causal rules in the program~$\textbf{P}$ and the explanations in~$\mathfrak{A}$ is consistent with the given observations.

\begin{example}
    Let $\textbf{P}$ denote the logic program in Example \ref{example - logic program sprinkler}. Since we expect that the causal knowledge in Example \ref{example - introduction running example} is insufficient to explain whether it is cloudy, we declare~$c$ as the only abducible and define~$\mathfrak{A} := \{ c \}$. We may also observe that the sprinkler is on, leading to the integrity constraint~$\textbf{IC} := \{ \bot \leftarrow \neg s \}$. Together, this yields the abductive logic program~$\mathcal{P} := (\textbf{P}, \mathfrak{A}, \textbf{IC})$.
    \label{example - abductive logic program}
\end{example}

Lastly, we recall the various semantics of an abductive logic program.

\begin{definition}[Models of Abductive Logic Programs]
    A \textbf{stable} or \textbf{supported model}~${\omega \subseteq \mathfrak{P}}$ of the abductive logic program~${\mathcal{P} := (\textbf{P},\mathfrak{A},\textbf{IC})}$ satisfies the integrity constraints $\textbf{IC}$, i.e., $\omega \models \textit{IC}$ (meaning~${\omega \not \models \body (\textit{IC})}$) for all $\textit{IC} \in \textbf{IC}$ and is a stable or supported model of the program~${\textbf{P} \cup (\omega \cap \mathfrak{A})}$.  The set $\epsilon := \omega \cap \mathfrak{A}$ is then called the \textbf{explanation} of~$\omega$.
    In this context, the program $\mathcal{P}$ is \textbf{consistent} if it has at least one model for every choice of abducibles.
    \label{definition - semantics of an abductive logic program}
\end{definition}

\begin{example}
    The abductive logic program in Example~\ref{example - sprinkler abductive logic program} has two stable and supported models, namely~$\omega_1$ and~$\omega_2$ from Example~\ref{example - sprinkler deterministic}, 
    with explanations~${\epsilon_1 := \{ c \}}$ and~$\epsilon_2 := \emptyset$, respectively. 
    
    Since only~$\omega_2$ is consistent with the observation that the sprinkler is on, expressed by the integrity constraint~$\bot \leftarrow \neg s$, it is the only supported model of the abductive logic program~$\mathcal{P}$ in Example~\ref{example - abductive logic program}.
    \label{example - semantics of abductive logic programs}
\end{example}

Following the lines of Rückschloß and Weitkämper \cite{ruckschloss2021exploiting}, who connect probabilistic logic programming to Pearl's theory of causality, we realize that abductive logic programs without integrity constraints under the supported model semantics give rise to causal models thereby transferring the notion of an intervention: 

\begin{definition}[CM-Semantics]
    Let ${\mathcal{P} := (\textbf{P},\mathfrak{A}, \emptyset)}$ be an abductive logic program without integrity constraints. The \textbf{causal model semantics} of $\mathcal{P}$ is the causal model $\CM(\textbf{P})$ that is given by the external variables $\mathfrak{A}$, the internal variables $\mathfrak{P} \setminus \mathfrak{A}$ and the structural equations $\displaystyle p := \bigvee_{\substack{C \in \textbf{P} \\ \head (C) = p}} \bigwedge \body (C)$ for $p \in \mathfrak{A}$. Note that by construction a structure $\omega$ is solution of $\CM (\textbf{P})$ if and only if it is a supported model of $\textbf{P}$.

    To represent the intervention of forcing the atoms in $\textbf{I} \subseteq \mathfrak{P} \setminus \mathfrak{A}$ to attain values according to the assignment $\textbf{i}$, the \textbf{modified (abductive logic) program}~${
    \mathcal{P}_\textbf{i} := (\textbf{P}_{\textbf{i}}, \mathfrak{A}_{\textbf{i}}, \emptyset)
    }$
    is obtained from $\mathcal{P}$ by the modifications below:
    \begin{itemize}
        \item[] Remove all clauses $C$ from $\textbf{P}$ for which $\textrm{head}(C) \in \textbf{i}$ or $\neg \textrm{head}(C) \in \textbf{i}$.
        \item[] Add a fact $p$ to $\textbf{P}_{\textbf{i}}$ whenever ${p \in \textbf{i}}$.
    \end{itemize} 
    Note that by construction $\CM(\textbf{P}_{\textbf{i}}) = \CM (\textbf{P})_{\textbf{i}}$.
    \label{definition - CM semantics}
\end{definition}

\begin{example}
    In Example \ref{example - sprinkler abductive logic program}, Rules (\ref{rules - sprinkler intervention}) correspond to the modified program $\mathcal{P}_s$ that corresponds to the assignement $s := \top$.
\end{example}

\subsection{Bochman's Logical Theory of Causality and Causal Systems}
\label{subsec: Bochman's Logical Theory of Causality}

To verify that the stable model semantics is causally meaningful, this work builds upon the work of Bochman~\cite{bochman} and Eelink et al.~\cite{EelinkRW25}. Both rely on the idea that causal knowledge should be expressed in the form of rules.

\begin{definition}[Causal Rules and Causal Theories]
    A \textbf{(literal) causal rule} $R$ is an expression of the form
    $
        b_1 \land ... \land b_n \Rightarrow l,
    $
    also denoted by~${
        \{ b_1, ... , b_n \} \Rightarrow l,
    }$
    where $b_1, ... , b_n, l$ are literals. We call $b_1 \land \cdots \land b_n$ the \textbf{cause} and $l$ the \textbf{effect} of~$R$. Informally, $R$ means that $b_1 \land \cdots \land b_n$ causes $l$.  
    If, in addition, $l \in \mathfrak{P}$ is an atom, the rule $R$ is \textbf{atomic}.  
    A \textbf{default rule} is a causal rule of the form $l \Rightarrow l$.  
    A \textbf{causal theory} is a set of causal rules $\Delta$.  
    It is called \textbf{atomic} if it contains only atomic causal rules.
    \label{definition - causal theory}
\end{definition}

\begin{example}
    Example~\ref{example - houses abductive logic program} gives rise to the following causal theory~$\Delta$:
    \begin{align*}
     & sf_1 \Rightarrow sf_1 && sf_2 \Rightarrow sf_2 && sf_1 \Rightarrow f_1 && sf_2 \Rightarrow f_2 && f_1 \Rightarrow f_2 && f_2 \Rightarrow f_1 \\
     & \neg sf_1 \Rightarrow \neg sf_1 && \neg sf_2 \Rightarrow \neg sf_2 && \neg f_1 \Rightarrow \neg f_1 && \neg f_2 \Rightarrow \neg f_2
    \end{align*}

    The default rule~$\neg f_1 \Rightarrow \neg f_1$ expresses that no explanation is required for House~1 not burning; that is, House~1 is assumed not to burn unless an explanation for~$f_1$ is given.  
    Since both default rules~$sf_1 \Rightarrow sf_1$ and~$\neg sf_1 \Rightarrow \neg sf_1$ are included in~$\Delta$, the truth value of~$sf_1$ can be chosen freely.


    \label{example - causal theories}
\end{example}

\mbox{Bochman \cite{bochman} extends the rules in a causal theory to a \textit{explainability relation}.}   

\begin{definition}
     Let $\Delta$ be a causal theory.
     The binary relation $(\Rrightarrow_\Delta)/2$  of \textbf{explainability} is defined inductively from the causal rules as follows:
    \begin{enumerate}
         \item[] If  $\lambda \Rightarrow l$, then $\lambda \Rrightarrow_\Delta l$. 
         \textbf{(Causal rules)}
         \item[] If $\lambda \Rrightarrow_\Delta l$, then $\lambda \cup \lambda' \Rrightarrow_\Delta l$. \textbf{(Literal Monotonicity)}
         \item[] If $\lambda' \Rrightarrow_\Delta l$ and $\lambda \cup \{ l \} \Rrightarrow_\Delta l'$, then $\lambda \cup \lambda' \Rrightarrow_\Delta l'$. \textbf{(Literal Cut)}
         \item[] $\{ p, \neg p \} \Rrightarrow_\Delta l$ for all propositions $p$ and literals $l$. \textbf{(Literal Contradiction)}
     \end{enumerate}
     If we find $\lambda \Rrightarrow_\Delta l$, we say that $\lambda$ \textbf{explains} $l$.
     \label{definition - determinate causal theories}
\label{definition - production inference relation}
\label{theorem - causal reasoning of literal causal theories}
\end{definition}

\begin{remark}
    Bochman~\cite{bochman} initially allows causal rules of the form~$\phi \Rightarrow \psi$ and explainability relations of the form~$\phi \Rrightarrow \psi$, where~$\phi$ and~$\psi$ are arbitrary formulas.  
    He argues that explainability~$(\Rrightarrow)/2$ satisfies all the properties of material implication ``$\rightarrow$'', except reflexivity (i.e.,~$\phi \rightarrow \phi$ does not necessarily hold).  
    Given a causal theory~$\Delta$ in the sense of Definition~\ref{definition - causal theory}, this work restricts attention to explainability relations as characterized in Definition~\ref{theorem - causal reasoning of literal causal theories}.  
    Theorem~4.23 in Bochman~\cite{bochman} then provides the basis for this characterization.
\end{remark}

\begin{example}
    In Example~\ref{example - causal theories}, we find that $f_1 \Rrightarrow_{\Delta} f_1$ and $f_2 \Rrightarrow_{\Delta} f_2$, even though the causal theory~$\Delta$ does not explicitly assert that~$f_1$ or $f_2$ are defaults.  
    \label{example - explanation}
\end{example}

Bochman's semantics \cite{bochman} for causal theories is grounded in Principles~\ref{principle - Aquinas} and~\ref{principle - Leibniz}:



\begin{definition}[Causal World Semantics]
    A \textbf{causal world} for a causal theory~$\Delta$ is a world~$\omega$ such that, for every literal~$l$, the following formalization of Principles \ref{principle - Aquinas} and \ref{principle - Leibniz} hold:
     \begin{multicols}{2}
    \begin{itemize}
        \item[] \textbf{Formalization of Principle~\ref{principle - Aquinas}}: 
        \item[] 
         If~$\omega \Rrightarrow_{\Delta} l$, then~${l \in \omega}$.
        \item[] \textbf{Formalization of Principle~\ref{principle - Leibniz}}: 
        \item[] 
        If~$l \in \omega$, then~${\omega \Rrightarrow_{\Delta} l}$.
    \end{itemize}
    \end{multicols}
    
    The \textbf{causal world semantics}~$\causal(\Delta)$ is the set of all causal worlds of~$\Delta$.
    \label{definition - causal worlds}
\end{definition}

Bochman \cite{bochman} gives the following alternative characterisation for the causal worlds~$\omega$ of a causal theory $\Delta$.

\begin{definition}[Completion of Causal Theories]
    The \textbf{completion} of a causal theory $\Delta$ is the set of formulas
    $ \displaystyle
    \comp (\Delta) := 
    \{ 
    l \leftrightarrow \bigvee_{\phi \Rightarrow l} \phi 
    \}_{l \text{ literal}}.
    $
    \label{definition - completion of a determinate causal theory}
\end{definition}

\begin{theorem}[Completion of Causal Theories $\text{\cite[Theorem 8.115]{bochman2005}}$]
     The causal world semantics~$\causal (\Delta)$ of a causal theory $\Delta$ coincides with the set of all models of its completion:
     $
     \causal (\Delta) = \{ \omega \textrm{ world: } \omega \models \comp (\Delta) \}
     \text{. } \square
     $
     \label{theorem - completion of a causal theory}
\end{theorem}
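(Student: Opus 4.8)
The plan is to prove that the causal world semantics $\causal(\Delta)$ coincides with the models of the completion $\comp(\Delta)$ by establishing a double inclusion, where in each direction the key is to reformulate the two defining conditions of a causal world (the formalizations of Principles~\ref{principle - Aquinas} and~\ref{principle - Leibniz}) directly in terms of the biconditionals in $\comp(\Delta)$. The central observation I would exploit is that the completion replaces the derived relation $\Rrightarrow_\Delta$ by the \emph{immediate} causes $\bigvee_{\phi \Rightarrow l} \phi$ taken from the raw causal rules of $\Delta$; so the main technical content is a lemma stating that, \emph{for a world $\omega$}, the condition $\omega \Rrightarrow_\Delta l$ is equivalent to $\omega$ satisfying one of the disjuncts $\phi$ with $\phi \Rightarrow l$. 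This lemma is what lets me pass between the inductively-defined explainability relation and the finite disjunction appearing in $\comp(\Delta)$.

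First I would fix a world $\omega$ (a maximal consistent set of literals) and prove the lemma. The direction ``if some rule $\phi \Rightarrow l$ has $\phi \subseteq \omega$, then $\omega \Rrightarrow_\Delta l$'' is immediate from clause~1 (Causal rules) together with Literal Monotonicity, since $\phi \subseteq \omega$ lets me weaken the cause up to all of $\omega$. The converse direction—that $\omega \Rrightarrow_\Delta l$ forces some immediate rule to fire in $\omega$—is the part requiring care: I would argue by induction on the derivation of $\omega \Rrightarrow_\Delta l$ in Definition~\ref{definition - production inference relation}. For the base case (Causal rules) the witnessing $\phi$ is the rule's own cause. For Literal Monotonicity the witness is inherited unchanged. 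For Literal Contradiction the premise $\{p,\neg p\} \subseteq \omega$ cannot occur because $\omega$ is consistent, so that case is vacuous. The delicate case is Literal Cut: from $\lambda' \Rrightarrow_\Delta l$ and $\lambda \cup \{l\} \Rrightarrow_\Delta l'$ I must produce, under $\lambda \cup \lambda' \subseteq \omega$, an immediate rule for $l'$ firing in $\omega$; here I would use the inductive hypothesis on the first premise to get an immediate cause of $l$ in $\omega$, conclude $l \in \omega$ via the Aquinas-style reasoning so that $\lambda \cup \{l\} \subseteq \omega$, and then apply the inductive hypothesis to the second premise.

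Granting the lemma, the two inclusions become routine. If $\omega \models \comp(\Delta)$, then for each literal $l$ the biconditional $l \leftrightarrow \bigvee_{\phi \Rightarrow l}\phi$ holds in $\omega$; combining this with the lemma, $l \in \omega$ holds exactly when $\omega \Rrightarrow_\Delta l$, which is precisely the conjunction of the formalizations of Principles~\ref{principle - Aquinas} and~\ref{principle - Leibniz}, so $\omega \in \causal(\Delta)$. Conversely, if $\omega$ is a causal world, then for every literal $l$ the two principles give $l \in \omega \iff \omega \Rrightarrow_\Delta l$, and by the lemma the right-hand side is equivalent to $\omega \models \bigvee_{\phi \Rightarrow l}\phi$; hence $\omega$ validates every biconditional of $\comp(\Delta)$.

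The main obstacle I anticipate is precisely the Literal Cut case of the lemma's inductive argument, because there the cause on the left of the derived relation is genuinely enlarged and one has to justify that the intermediate literal $l$ already lies in $\omega$ before invoking the inductive hypothesis on the second premise. A secondary subtlety worth flagging is that this entire argument is carried out over \emph{worlds} rather than arbitrary consistent literal sets, so the consistency of $\omega$ must be used essentially—both to dismiss the Literal Contradiction clause and to guarantee that ``$l \notin \omega$'' is equivalent to ``$\neg l \in \omega$'' when reading off the biconditionals. Once these points are handled, since $\comp(\Delta)$ is exactly the stated set of biconditionals, the equality $\causal(\Delta) = \{\omega \text{ world}: \omega \models \comp(\Delta)\}$ follows, and I would note that this is a literal-level analogue of Clark's completion theorem, which is why the proof mirrors the supported-model argument.
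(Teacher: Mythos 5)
The paper does not actually prove this theorem: it is imported verbatim from Bochman's book (Theorem~8.115), as the terminal $\square$ in the statement indicates, so there is no in-paper proof to compare your attempt against; I therefore judge it on its own merits. Your overall architecture (a Clark-style key lemma reducing the derived relation $\Rrightarrow_\Delta$ to the immediate rules of $\Delta$, followed by a routine double inclusion) is workable, but the key lemma is false as you state it, namely for an \emph{arbitrary} world $\omega$. Counterexample: take $\Delta = \{ a \Rightarrow b \}$ over the alphabet $\{ a, b \}$ and the world $\omega = \{ a, \neg b \}$. The causal-rule clause gives $\{ a \} \Rrightarrow_\Delta b$, Literal Contradiction gives $\{ b, \neg b \} \Rrightarrow_\Delta \neg a$, and Literal Cut (with $\lambda' = \{ a \}$, $l = b$, $\lambda = \{ \neg b \}$, $l' = \neg a$) then yields $\{ a, \neg b \} \Rrightarrow_\Delta \neg a$, i.e.\ $\omega \Rrightarrow_\Delta \neg a$, even though no rule of $\Delta$ has head $\neg a$ at all, so no disjunct of the (empty) disjunction for $\neg a$ in $\comp(\Delta)$ can be satisfied in $\omega$; the same mechanism in fact produces $\omega \Rrightarrow_\Delta l$ for \emph{every} literal $l$. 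Your treatment of the cut case shows exactly where this breaks: you ``conclude $l \in \omega$ via the Aquinas-style reasoning,'' but Principle~\ref{principle - Aquinas} is not a property of arbitrary worlds --- it is one of the two conditions the theorem is characterizing, so invoking it inside a lemma that is supposed to hold for all worlds is circular; in the counterexample a rule for $b$ does fire in $\omega$ and yet $b \notin \omega$.

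The repair is to strengthen the lemma's hypothesis to: $\omega$ is a world satisfying every rule of $\Delta$ read as a material implication (whenever $\phi \Rightarrow l$ is in $\Delta$ and $\omega \models \phi$, then $l \in \omega$). Under that hypothesis your cut argument goes through verbatim --- the closure property, not Aquinas, yields $l \in \omega$ --- and Literal Contradiction remains vacuous by consistency. Crucially, the strengthened hypothesis is available on both sides of the double inclusion: a model of $\comp(\Delta)$ satisfies the rules as implications by the right-to-left halves of its biconditionals, and a causal world does so by the causal-rule clause, Literal Monotonicity, and Principle~\ref{principle - Aquinas}; so the rest of your argument is unaffected. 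This is precisely the precaution the paper itself takes in its structurally identical induction: Lemma~\ref{Lemma - Internal rules} assumes from the outset that $\omega$ is a model of the propositional theory obtained by reading the causal rules of $\Delta$ as logical implications, and it is exactly this assumption that rescues the cut case there.
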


\begin{example}
    In Example \ref{example - causal theories},  $\Delta$ has the causal world ${\omega := \{ f_1, f_2 \}}$, which contradicts everyday causal reasoning as explained in Example \ref{example - houses abductive logic program}. Note that $\omega$ is a causal world of $\Delta$ since the framework of causal theories allows for the cyclic explanations in Example \ref{example - explanation}.
    \label{example - causal world semantics of causal theories}
\end{example}

To avoid cyclic explanations as in Example \ref{example - explanation}. Eelink et al.~\cite{EelinkRW25} extend Bochman's causal theories~\cite{bochman} to accommodate a set of external premises $\mathcal{E}$ that do not require further explanation. Motivated by the ideas in Aristotles's \textit{Posterior Analytics}, they additionally apply Principle \ref{principle - causal foundation}. This leads them to the set-up of causal systems:

\begin{definition}[Causal System]
    A \textbf{causal system} $\textbf{CS} := (\Delta, \mathcal{E}, \mathcal{O})$ consists~of a causal theory $\Delta$ called the \textbf{causal knowledge} of $\textbf{CS}$, a set of literals~$\mathcal{E}$ called the \textbf{external premises} of $\textbf{CS}$ and
    a set of formulas $\mathcal{O}$ called the \textbf{observations} of $\textbf{CS}$.
    The causal system $\textbf{CS}$ is \textbf{without observations} if $\mathcal{O} = \emptyset$. Otherwise, the causal system $\textbf{CS}$ \textbf{observes something}. The causal system $\textbf{CS}$ applies \textbf{default negation} if every negative literal $\neg p$ for $p \in \mathfrak{P}$ is an external premise, i.e.,~${\neg p \in \mathcal{E}}$ and no external premise is an effect of a causal rule in $\Delta$.
    Further, the system $\textbf{CS}$ is \textbf{atomic} if $\Delta$ is an atomic causal theory. 

    The causal theory $
    \Delta(\mathbf{CS}) := \Delta \cup \{ l \Rightarrow l \mid l \in \mathcal{E} \}
    $
    is called the \textbf{explanatory closure} of~$\mathbf{CS}$. A \textbf{causally founded explanation} is an explanation~${\lambda \Rrightarrow_{\Delta(\mathbf{CS})} l}$ such that $\lambda \subseteq \mathcal{E}$. Causally founded explanations formalize Principle \ref{principle - causal foundation}.  
    \begin{itemize}
        \item[] \textbf{Formalization of Principle~\ref{principle - causal foundation}:} 
        Every explanation is causally founded .
\end{itemize}

     A \textbf{causally founded world}~$\omega$ is a world such that $\omega \models \mathcal{O}$ and for every literal $l$ the following formalizations of Principles~\ref{principle - Aquinas} and \ref{principle - Leibniz} are satisfied:
     \begin{itemize}
        \item[] \textbf{Formalization of Principle~\ref{principle - Aquinas}}: 
        \item[] If~there exists a causally founded explanation $\omega \cap \mathcal{E} \Rrightarrow_{\Delta(\textbf{CS})} l$, then~${l \in \omega}$.
        \item[] \textbf{Formalization of Principle~\ref{principle - Leibniz}}: 
        \item[]If~$l \in \omega$, then there exists a causally founded explanation $\omega \cap \mathcal{E} \Rrightarrow_{\Delta(\textbf{CS})} l$.
    \end{itemize} 
    \label{definition - abductive causal theory}
    \label{definition - causal system}
\end{definition}

\begin{example}
    Example~\ref{example - houses abductive logic program} gives rise to the causal system with default negation and without observations, defined as~${\textbf{CS} := (\Delta, \mathcal{E}, \emptyset)}$, where~${\Delta := \{ f_1 \Rightarrow f_2, f_2 \Rightarrow f_1 \}}$ is an atomic causal theory and ~${\mathcal{E} := \{ sf_i, \neg sf_i, \neg f_i \}_{i = 1,2}}$ the set of external premises. The explanatory closure~$\Delta(\textbf{CS})$ of~$\textbf{CS}$ coincides with the causal theory in Example~\ref{example - causal theories}. 
    Note that the cyclic explanations from Example~\ref{example - explanation} are not causally founded. Hence, the world~$\omega$ in Example~\ref{example - causal world semantics of causal theories} is not causally founded.
    \label{example - causally founded worlds of a causal system}
\end{example}

\section{Problem Statement}

Example~\ref{example - houses abductive logic program} shows that abductive logic programming under the causal model semantics can yield counterintuitive results in the presence of cyclic causal relationships. From the perspective of logic programming, such issues are typically addressed by applying the stable model semantics of Gelfond and Lifschitz~\cite{StableModelSemantics}. However, it remains an open question whether this approach admits a causally meaningful interpretation that accounts for interventions.

\section{Main Result}

Throughout this section, we fix a propositional alphabet~$\mathfrak{P}$. We begin by introducing the Bochman transformation, which identifies abductive logic programs with causal systems featuring default negation~\cite{EelinkRW25}, as defined in Definition~\ref{definition - causal system}.

Informally, the Bochman transformation interprets clauses
$
h \leftarrow b_1 \land \dots \land b_n 
$
as ``$b_1 \land \dots \land b_n$ \textit{causes} $h$,'' treats the abducibles as external premises whose explanations lie beyond the given model, and regards the integrity constraints as observations.

\begin{definition}[Bochman Transformation]
    The \textbf{Bochman transformation} of an abductive logic program 
    $\mathcal{P} := (\textbf{P}, \mathfrak{A}, \textbf{IC})$ is the causal system 
    with default negation $\textbf{CS}(\mathcal{P}) := (\Delta, \mathcal{E}, \mathcal{O})$,
    where $\Delta := \{ \body(C) \Rightarrow \head(C) \mid C \in \textbf{P} \}$, 
    $\mathcal{E} := \mathfrak{A} \cup \{ \neg p \mid p \in \mathfrak{P} \}$, and 
    $\mathcal{O} := \textbf{IC}$.
    \label{definition - Bochman transformation}
\end{definition}

\begin{example}
    Let~$\mathcal{P}$ be the abductive logic program in Example \ref{example - houses abductive logic program}. The causal system in Example \ref{example - causally founded worlds of a causal system} is the Bochman transformation $\textbf{CS}(\mathcal{P})$ of $\mathcal{P}$.
\end{example}

Let $\mathcal{P} := (\textbf{P}, \mathfrak{A}, \textbf{IC})$ be an abductive logic program. We show that the stable models of~$\mathcal{P}$ correspond to the causally founded worlds of its Bochman transformation~$\textbf{CS}(\mathcal{P}) := (\Delta, \mathcal{E}, \mathcal{O})$. Together with the formalizations in Definition~\ref{definition - causal system}, this supports that the stable model semantics follows from Principles~\ref{principle - causal foundation}, \ref{principle - Aquinas}, and~\ref{principle - Leibniz}.

We begin by relating unfounded sets to explainability in causal theories.

\begin{definition}[Internal and External Explanations]
    Let $\Delta$ be an atomic causal theory, $\omega$ a model of the propositional theory obtained by reading the causal rules in $\Delta$ as logical implications, and $I \subseteq \omega$ a set of positive literals true in $\omega$.
    
    An \textbf{$I$-external explanation} is an expression of the form $\lambda \Rrightarrow_{\Delta} l$, where~${l \in I}$ and $\lambda$ is a set of literals that are true in $\omega$ and do not belong to $I$. An \textbf{$I$-internal explanation} is an expression of the form $\lambda \Rrightarrow_{\Delta} l$ that is not $I$-external.
    \label{def:Internal}
\end{definition}

Let $\omega$ be a causally founded world of the Bochman transformation ${\textbf{CS}(\mathcal{P})}$. Definition~\ref{definition - Bochman transformation} ensures that a subset~$I \subseteq \omega$ of atoms true in~$\omega$ can be unfounded with respect to any program extending the underlying logic program~$\textbf{P}$ only if every rule~$\lambda \Rightarrow l$ in~$\Delta$ corresponds to an $I$-internal explanation~$\lambda \Rrightarrow_{\Delta} l$.

\begin{lemma}
    Let $\Delta$, $\omega$ and $I$ be as in Definition \ref{def:Internal}. 
    If every rule $\lambda \Rightarrow l$ in $\Delta$ corresponds to an \mbox{$I$-internal} explanation $\lambda \Rrightarrow_{\Delta} l$, then every other explanation~${\lambda' \Rrightarrow_{\Delta} l'}$ is also~\mbox{$I$-internal} as well.
    \label{Lemma - Internal rules}
\end{lemma}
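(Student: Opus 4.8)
The plan is to argue by structural induction on the derivation of the explainability relation $\Rrightarrow_\Delta$ according to the four clauses of Definition~\ref{definition - production inference relation}, showing that each clause preserves $I$-internality. Recall that an explanation fails to be $I$-internal precisely when it is $I$-external, that is, when its effect lies in $I$ while every literal of its cause is true in $\omega$ and outside $I$; so at each step I would verify that the constructed explanation cannot meet all three of these conditions simultaneously.

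Before the induction I would record an auxiliary soundness observation: whenever $\lambda \Rrightarrow_\Delta l$ and every literal of $\lambda$ is true in $\omega$, then $l$ is true in $\omega$. This follows by a parallel induction on $\Rrightarrow_\Delta$: the causal-rule clause uses that $\omega$ models the propositional reading of $\Delta$ (Definition~\ref{def:Internal}); monotonicity and cut are immediate by chaining the hypotheses; and the contradiction clause $\{p,\neg p\} \Rrightarrow_\Delta l$ is vacuous, since the consistent world $\omega$ can never make both $p$ and $\neg p$ true.

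For the main induction the two base clauses are easy. Causal rules $\lambda \Rightarrow l$ are $I$-internal by the hypothesis of the lemma. A contradiction rule $\{p,\neg p\} \Rrightarrow_\Delta l$ is $I$-internal because $\{p,\neg p\}$ cannot be entirely true in the consistent world $\omega$, so such an explanation is never $I$-external. Literal monotonicity is equally direct: passing from $\lambda \Rrightarrow_\Delta l$ to $\lambda \cup \lambda' \Rrightarrow_\Delta l$ leaves the effect $l$ unchanged and only enlarges the cause, so whatever witness (an effect outside $I$, a cause-literal false in $\omega$, or a cause-literal lying in $I$) made the premise $I$-internal still witnesses $I$-internality of the conclusion.

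The delicate case, which I expect to be the main obstacle, is Literal Cut: from $I$-internal premises $\lambda' \Rrightarrow_\Delta l$ and $\lambda \cup \{l\} \Rrightarrow_\Delta l'$ one obtains $\lambda \cup \lambda' \Rrightarrow_\Delta l'$, and I would assume for contradiction that this conclusion is $I$-external. Then $l' \in I$, every literal of $\lambda \cup \lambda'$ is true in $\omega$, and none lies in $I$. Applying $I$-internality of the second premise $\lambda \cup \{l\} \Rrightarrow_\Delta l'$---whose effect $l'$ is in $I$ and whose $\lambda$-part is already true in $\omega$ and outside $I$---forces the mediating literal $l$ either to be false in $\omega$ or to lie in $I$. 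The first possibility is ruled out by the soundness observation applied to $\lambda' \Rrightarrow_\Delta l$, since all of $\lambda'$ is true in $\omega$ and hence so is $l$; the second possibility would make $\lambda' \Rrightarrow_\Delta l$ itself $I$-external, with effect $l \in I$ and cause $\lambda'$ true in $\omega$ and outside $I$, contradicting the $I$-internality of the first premise. Either way we reach a contradiction, so the conclusion is $I$-internal, which completes the induction.
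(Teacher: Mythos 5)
Your proposal is correct and follows essentially the same route as the paper's proof: induction over the three inference rules generating $\Rrightarrow_\Delta$, with monotonicity and contradiction handled identically, and the cut case resolved by combining internality of the two premises with the soundness of explainability read as propositional implication in $\omega$. The only cosmetic difference is that you isolate that soundness fact as an explicit auxiliary claim with its own induction, whereas the paper invokes it inline at the end of the cut case.
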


\begin{proof}
    By Definition \ref{theorem - causal reasoning of literal causal theories},  $\lambda' \Rrightarrow_\Delta l'$ follows from the rules in $\Delta$ through iterated applications of literal cut, literal monotonicity and literal contradiction. So it suffices to show that as long as the rules in the premises to those three inference rules are $I$-internal, then so is their consequence. 
    
    We use the notation of Definition \ref{theorem - causal reasoning of literal causal theories}. 
    
    For literal monotonicity, the statement is clear, since if $\lambda \cup \lambda'$ is a set of literals true in $\omega$ and not in $I$, then so is its subset $\lambda$. 

    Since $\omega$ is a structure, $q$ and $\neg q$ can never both be true in $\omega$.
    Therefore, literals contradiction can never yield an $I$-external explanation. 

    So it only remains to consider literal cut. 
    Assume for contradiction that ${\lambda \cup \lambda' \Rrightarrow_\Delta l'}$ is \mbox{$I$-external} despite both premises $\lambda' \Rrightarrow_{\Delta} l$ and~${\lambda \cup \{ l \} \Rrightarrow_{\Delta} l'}$ being $I$-internal. 
    Then $\lambda \cup \lambda'$ is a set of literals true in $\omega$ and not in $I$, and therefore so are $\lambda$ and~$\lambda'$. Thus, since both premises are $I$-internal, the atom~$l$ must not be in $I$ and also not true in $\omega$. 
    However, this contradicts the fact that $\omega$ is a model of the propositional theory corresponding to the rules of $\Delta$.
    Indeed, note that (literal) cut, monotonicity and contradiction are all true for propositional logic, where the triple arrow is read as logical implication.
    Thus, since $\lambda \Rrightarrow_{\Delta} l$ is obtained from rules in $\Delta$ using those three axioms, $\omega$ is a model of $\lambda \rightarrow l$ and of $\lambda$ and thus of $l$, yielding the desired contradiction.~$\square$
\end{proof}

With Lemma~\ref{Lemma - Internal rules} at hand, we can now prove our first result.

\begin{theorem}[Bochman Transformation]
    The Bochman transformation is a bijection from abductive logic programs to causal systems with default~\mbox{negation}.
    
    An abductive logic program~${\mathcal{P} := (\textbf{P}, \mathfrak{A}, \textbf{IC})}$ has a stable model~$\omega$ if and only if~$\omega$ is a causally founded world with respect to its Bochman transformation~${\textbf{CS}(\mathcal{P}) := (\Delta, \mathcal{E}, \mathcal{O})}$.
    \label{theorem - bochman transformation}
\end{theorem}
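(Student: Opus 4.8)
The plan is to prove the two assertions of Theorem~\ref{theorem - bochman transformation} separately. For the bijection claim, I would verify that the Bochman transformation is well-defined, injective, and surjective onto causal systems with default negation. Well-definedness amounts to checking that $\textbf{CS}(\mathcal{P})$ genuinely satisfies the default-negation condition of Definition~\ref{definition - causal system}: every negative literal $\neg p$ lies in $\mathcal{E}$ by construction, and no external premise is the effect of a causal rule in $\Delta$, because abducibles are never heads of clauses in $\textbf{P}$ (Definition~\ref{definition - abductive logic program}) and the negative literals $\neg p$ are never heads either, since heads of clauses are atoms. Injectivity follows because each component of $\textbf{CS}(\mathcal{P})$ recovers the corresponding component of $\mathcal{P}$: the rules of $\Delta$ of the form $\body(C) \Rightarrow \head(C)$ recover $\textbf{P}$, the observations recover $\textbf{IC}$, and $\mathfrak{A}$ is recovered as $\mathcal{E} \setminus \{ \neg p \mid p \in \mathfrak{P} \}$. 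For surjectivity, given any causal system with default negation $(\Delta, \mathcal{E}, \mathcal{O})$, I would read each atomic rule $\lambda \Rightarrow l$ back as a clause $l \leftarrow \lambda$, set $\mathfrak{A} := \mathcal{E} \setminus \{ \neg p \}$, and $\textbf{IC} := \mathcal{O}$, using the default-negation hypothesis to guarantee that no element of $\mathfrak{A}$ is a head so that the resulting triple is a legitimate abductive logic program.

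The substantive part is the model-correspondence claim, which I would prove by showing that a world $\omega$ satisfying the observations is a stable model of $\mathcal{P}$ exactly when it is a causally founded world of $\textbf{CS}(\mathcal{P})$. Since both notions require $\omega \models \mathcal{O} = \textbf{IC}$, I can factor this out and work with the program $\textbf{P} \cup (\omega \cap \mathfrak{A})$. First I would establish that $\omega$ satisfying every clause of $\textbf{P}$ as a propositional formula is equivalent to the formalization of Principle~\ref{principle - Leibniz} together with the classical reading of the rules; more precisely, the completion characterisation of Theorem~\ref{theorem - completion of a causal theory} and the Clark completion are designed to align, so I would match the disjunction $\bigvee_{\phi \Rightarrow l} \phi$ in $\comp(\Delta(\textbf{CS}))$ against the Clark completion of $\textbf{P} \cup (\omega \cap \mathfrak{A})$, taking into account the default rules $l \Rightarrow l$ for $l \in \mathcal{E}$ that encode the free choice of abducibles and the default-falsity of atoms. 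The key conceptual bridge is the restriction to \emph{causally founded} explanations, i.e.\ explanations $\omega \cap \mathcal{E} \Rrightarrow_{\Delta(\textbf{CS})} l$, which is what rules out the spurious cyclic support illustrated in Examples~\ref{example - explanation} and~\ref{example - causal world semantics of causal theories}.

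The crux is to tie this causal-foundation requirement to the absence of unfounded sets. I would argue the contrapositive in both directions using Lemma~\ref{Lemma - Internal rules}. Suppose $\omega$ (a classical model of the rules satisfying $\textbf{IC}$) is not stable; then there is a nonempty unfounded set $I \subseteq \omega$ of atoms. For each $p \in I$, every rule with head $p$ has a body literal that is false in $\omega$ or lies in $I$; reading these rules as causal rules, this says precisely that the only explanations feeding into $p$ are $I$-internal in the sense of Definition~\ref{def:Internal}. By Lemma~\ref{Lemma - Internal rules}, once every $\Delta$-rule producing an element of $I$ is $I$-internal, every derived explanation of any $p \in I$ is $I$-internal as well, so no causally founded explanation $\omega \cap \mathcal{E} \Rrightarrow_{\Delta(\textbf{CS})} p$ can exist (such an explanation would be $I$-external, since $\omega \cap \mathcal{E}$ contains no element of $I$, as $I$ consists of atoms that are not abducibles). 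This violates the formalization of Principle~\ref{principle - Leibniz}, so $\omega$ is not causally founded. Conversely, if $\omega$ is not causally founded, I would locate the set $I$ of atoms in $\omega$ that lack a causally founded explanation and show it forms an unfounded set, again invoking Lemma~\ref{Lemma - Internal rules} to propagate internality. The main obstacle I anticipate is this last step: carefully extracting the unfounded set from a violation of Principle~\ref{principle - Leibniz} and verifying that the $I$-internality of its generating rules exactly matches the unfoundedness condition, while correctly handling the default rules and the negative external premises so that the abducibles and the default-false atoms never interfere with the argument.
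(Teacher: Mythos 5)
Your proposal is correct and follows essentially the same route as the paper's proof: a by-construction bijection, alignment of the Clark completion of $\textbf{P}$ with the completion of the explanatory closure $\Delta(\textbf{CS}(\mathcal{P}))$, and the key correspondence between unfounded sets and $I$-internal explanations via Lemma~\ref{Lemma - Internal rules}. The only cosmetic differences are that you phrase both directions contrapositively and invoke Lemma~\ref{Lemma - Internal rules} in the second direction as well, where the paper (and your argument, once spelled out) needs only a direct iterated literal-cut argument showing that the set of atoms lacking causally founded explanations is unfounded, since the Lemma's hypothesis (all rules $I$-internal) is exactly what is being established there.
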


\begin{proof}
    By construction the Bochman transformation is a bijection between abductive logic programs and causal systems with default~\mbox{negation}.
    Since the integrity constraints are carried over unchanged by the Bochman transformation as observations, we can assume without loss of generality that~$\mathcal{P}$ is without integrity constraints.
    
    We show that every stable model of  ${\textbf{P} \cup (\omega \cap \mathfrak{A})}$ is a causal founded world of ${\textbf{CS}(\mathcal{P})} =: (\Delta,\mathcal{E},\emptyset)$. 

    So let $\omega$ be such a stable model. 
    We need to show for all literals $l$ that ${\omega \cap \mathcal{E} \Rrightarrow_{\Delta(\textbf{CS})} l}$ if and only if $l \in \omega$. 
    Note that since the rules in $\Delta$ correspond precisely to the clauses of the underlying logic program $\textbf{P}$, the Clark completion of $\textbf{P}$ coincides with the completion of the explanatory closure $\Delta(\textbf{CS})$.
    Since every stable model is also a supported model, the structure $\omega$ is a model of the completion of $\Delta({\textbf{CS}(\mathcal{P})})$. 
    
    For any set $\Lambda$ of literals, we introduce the notation $\mathcal{C}(\Lambda)$ to indicate the set of all literals~$l$ such that $\Lambda \Rrightarrow_{\Delta(\textbf{CS}(\mathcal{P}))} l$.  
    Thus, Theorem \ref{theorem - completion of a causal theory} states that $\mathcal{C} (\omega) = \omega$ and by literal monotonicity $\mathcal{C} (\omega \cap \mathcal{E}) \subseteq \omega$. 
    It remains to show that $\omega \subseteq \mathcal{C} (\omega \cap \mathcal{E})$, or, in other words, that $I :=\omega \setminus \mathcal{C}(\omega \cap \mathcal{E}) = \emptyset$.
    Note first that since all negated literals are in $\mathcal{E}$, $I$ is a set of positive literals.
    We show that if it were non-empty, $I$ were an unfounded set. 
    Assume that~$I$ is not unfounded. 
    Then there would be a $p \in I$ and a clause $p \leftarrow b_1, \dots, b_n$ such that all of $ b_1, \dots, b_n$ are in $\omega \setminus I = \mathcal{C}(\omega \cap \mathcal{E})$. 
    However, this implies that $\omega \cap \mathcal{E} \Rrightarrow_{\Delta({\textbf{CS}(\mathcal{P})})} b_i$  for $i \in 1, \dots, n$ and therefore by $n$-fold iteration of the (literal) cut, that  $\omega \cap \mathcal{E} \Rrightarrow_{\Delta({\textbf{CS}(\mathcal{P})})} p$ and thus that $p \in \mathcal{C}(\omega \cap \mathcal{E})$. 
    This in turn contradicts $p \in I$, and thus concludes the proof that $I = \emptyset$ since stable models contain no non-empty unfounded sets of atoms. 
    Overall, we have shown that $\mathcal{C} (\omega \cap \mathcal{E}) = \omega$ and therefore that $\omega$ is a causally founded world of ${\textbf{CS}(\mathcal{P})}$. 

    We turn to the converse direction, demonstrating that every causally founded world of ${\textbf{CS}(\mathcal{P})}$ is a stable model of $\mathcal{P}$. 
    By Theorem \ref{theorem - completion of a causal theory}, every causally founded world of ${\textbf{CS}(\mathcal{P})}$ is a model of the propositional theory corresponding to the clauses of $\textbf{P}$. 

    So it remains to show that $\omega$ has no unfounded sets of atoms with respect to~${\textbf{P} \cup (\omega \cap \mathfrak{A})}$. 
    Assume it does have such a set, say $I$.

    As $\omega$ is a causally founded world we obtain $\omega \cap \mathcal{E}  \Rrightarrow_{\Delta({\textbf{CS}(\mathcal{P})})} p$ for any~${p \in I}$. Since every abducible atom true in $\omega$ corresponds to a fact of ${\textbf{P} \cup (\omega \cap \mathfrak{A})}$, the unfounded set $I$ must be disjoint from~$ \mathfrak{A}$ and thus from $\mathcal{E}$.
    Therefore,  $\omega \cap \mathcal{E} \Rrightarrow_{\Delta({\textbf{CS}(\mathcal{P})})} p$ is $I$-external. 

    By Lemma \ref{Lemma - Internal rules}, this implies that one of the rules of $\Delta({\textbf{CS}(\mathcal{P})})$ is $I$-external and thus that $I$ is not an unfounded subset of $\omega$. 

    This concludes the proof that $\omega$ is a stable model of ${\textbf{P} \cup (\omega \cap \mathfrak{A})}$ as claimed.~$\square$
\end{proof}

According to Theorem~\ref{theorem - bochman transformation}, Principles~\ref{principle - causal foundation}, \ref{principle - Aquinas}, and \ref{principle - Leibniz}, together with the formalizations in Definition~\ref{definition - causal system}, entail that a causal interpretation of abductive logic programming necessarily yields the stable model semantics of Gelfond and Lifschitz~\cite{StableModelSemantics}. This raises the question of whether every abductive logic program admits such a causal interpretation.

\begin{example}
    Let $e$ denote the event that a farmer is ecological, and $h$ the event that it is hot. Further, let $s$ denote that pests survive the weather, $p$ that there are pests in the field, and $t$ that the farmer applies toxin.

    Assume pests survive if it is hot. If no toxin is applied, pests remain; toxin is applied if pests are present and the farmer is not ecological. These relations define an abductive logic program $\mathcal{P}$ with abducibles $\mathfrak{A} := \{h, e\}$ and rules $\textbf{P}$:
    \begin{align*}
        t &\leftarrow p, \neg e, &
        p &\leftarrow \neg t, s, &
        s &\leftarrow h.
    \end{align*}
    
    The program $\mathcal{P}$ has no stable model with explanation $\epsilon := \{h\}$. Hence, it concludes against the causal direction, namely, that it cannot be hot if the farmer is not ecological. This clearly contradicts everyday intuition.
    \label{example - conclude against causal direction}
\end{example}

We argue that to represent causal knowledge, an abductive logic program must also satisfy Principle~\ref{principle - causal irrelevance}, as explored by Williamson~\cite{Williamson2001} in the context of Bayesian networks. In abductive logic programming, we interpret Principle~\ref{principle - causal irrelevance} as the following semantic constraint:

\begin{formalization}[Principle~\ref{principle - causal irrelevance}]
    Let $\mathcal{P} := (\textbf{P}, \mathfrak{A}, \textbf{IC})$ be a consistent abductive logic program.  
    For a set $S \subseteq \mathfrak{P}$ of propositions, let $\mathfrak{P}^{> S}$ denote the set of all propositions $q \not\in S$ that are descendants in the dependency graph $G := \graph(\textbf{P})$ of some proposition in $S$.  
    Finally, denote by $\textbf{P}^{>S}$ the program consisting of all clauses $C \in \textbf{P}$ with $\head(C) \in \mathfrak{P}^{> S}$.

    Then, $\mathcal{P}$ \emph{satisfies Principle~\ref{principle - causal irrelevance}} if and only if for every set $S \subseteq \mathfrak{P}$ and every $\mathfrak{P} \setminus \mathfrak{P}^{> S}$-structure $\omega$, the program
    $
        \textbf{P}^{>S, \omega} := \textbf{P}^{>S} \cup \omega
    $
    has at least one stable model; that is, it is not possible to falsify $\omega$ with $\textbf{P}^{>S}$.
    \label{formalization - causal irrelevance}
\end{formalization}

\begin{remark}
    Williamson proposes Principle~\ref{principle - causal irrelevance} in the context of maximum entropy as a weakening of the Markov assumption in Bayesian networks~\cite{Causality}. 
    Accordingly, the above formalization could be viewed as a deterministic analogue of the Markov assumption.
\end{remark}

\begin{example}
    The program $\mathcal{P}$ in Example~\ref{example - conclude against causal direction} does not satisfy Principle~\ref{principle - causal irrelevance}.
\end{example}

We argue that every abductive logic program $\mathcal{P}$ satisfying Principle~\ref{principle - causal irrelevance} under the above formalization admits a causal interpretation under the stable model semantics. This raises the question whether $\mathcal{P}$ can be used to predict the effect of external interventions.

According to Pearl~\cite{Causality}, the joint act of observing and intervening leads to counterfactual reasoning, which lies beyond the scope of this contribution. Therefore, we restrict our interest to programs without integrity constraints and argue that they admit a meaningful representation of interventions if Principle~\ref{principle - non-interference} holds. The following result shows that Principle~\ref{principle - causal irrelevance} implies Principle~\ref{principle - non-interference}.

\begin{theorem}
    Let $\mathcal{P} := (\textbf{P}, \mathfrak{A}, \emptyset)$ be an abductive logic program without integrity constraints that satisfies Principle~\ref{principle - causal irrelevance}, and let $\mathbf{i}$ be an assigment on a set of propositions $S \subseteq \mathfrak{P}$. Define $\mathfrak{P}^{< S} := \mathfrak{P}\setminus (\mathfrak{P}^{> S} \cup S)$, ${\textbf{P}^{< S} := \{ C \in \textbf{P} \mid \head(C) \in \mathfrak{P}^{< S} \}}$ and  $\mathcal{P}^{< S} := (\textbf{P}^{< S}, \mathfrak{A}, \emptyset)$. Then, the following are equivalent:
    \begin{enumerate}
        \item $\omega^{< S} = \omega \cap \mathfrak{P}^{< S}$ for some stable model $\omega$ of $\mathcal{P}$, i.e., $\omega^{< S}$ is a reduct of $\omega$.
        \item $\omega^{< S}$ is a stable model of $\mathcal{P}^{< S}$.
        \item $\omega^{< S} = \omega_{\mathbf{i}} \cap \mathfrak{P}^{< S}$ for some stable model $\omega_{\mathbf{i}}$ of $\mathcal{P}_{\mathbf{i}}$, i.e., $\omega^{< S}$ is a reduct of $\omega_{\mathbf{i}}$.
    \end{enumerate}
    \label{theorem - non-interference}
\end{theorem}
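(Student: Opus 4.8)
The plan is to exploit that the set $\mathfrak{P}^{<S}$ is closed under taking causes. By definition no proposition in $\mathfrak{P}^{<S}$ is a descendant of $S$, so every clause $C\in\textbf{P}$ with $\head(C)\in\mathfrak{P}^{<S}$ already has all of its body atoms in $\mathfrak{P}^{<S}$; hence $\textbf{P}^{<S}$ is a self-contained lower program and $\mathfrak{P}^{<S}$ behaves as a splitting set. I would first record two consequences directly from the unfounded-set definition. \emph{Restriction:} if $\omega$ is a stable model of $\mathcal{P}$, then $\omega^{<S}=\omega\cap\mathfrak{P}^{<S}$ satisfies the lower clauses, and any unfounded set $I\subseteq\omega^{<S}$ for $\mathcal{P}^{<S}$ would, since the relevant rules and all their body literals live in $\mathfrak{P}^{<S}$, also be unfounded for $\textbf{P}\cup(\omega\cap\mathfrak{A})$, contradicting stability of $\omega$; this gives $(1)\Rightarrow(2)$. \emph{Invariance under intervention:} since $\mathbf{i}$ only deletes clauses with head in $S$ and adds facts for $S$, it adds no edges and removes only edges into $S$, so reachability from $S$ — hence $\mathfrak{P}^{>S}$, and therefore $\textbf{P}^{<S}$ — is unchanged, giving $(\mathcal{P}_{\mathbf{i}})^{<S}=\mathcal{P}^{<S}$. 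Applying the restriction argument to $\mathcal{P}_{\mathbf{i}}$ then yields $(3)\Rightarrow(2)$.

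For $(2)\Rightarrow(3)$ I would extend $\omega^{<S}$ upward inside the intervened program. After intervention the propositions in $S$ are pinned to $\mathbf{i}$ by facts and carry no defining clauses, so the only layer left to compute above $\mathfrak{P}^{<S}$ is $\mathfrak{P}^{>S}$, governed by $\textbf{P}^{>S}$. Now $\omega^{<S}\cup\mathbf{i}$ is a $\mathfrak{P}\setminus\mathfrak{P}^{>S}$-structure, so Formalization~\ref{formalization - causal irrelevance} of Principle~\ref{principle - causal irrelevance} applied to the set $S$ guarantees that $\textbf{P}^{>S}\cup(\omega^{<S}\cup\mathbf{i})$ has a stable model; gluing it to $\omega^{<S}$ and $\mathbf{i}$ along the splitting set produces a stable model $\omega_{\mathbf{i}}$ of $\mathcal{P}_{\mathbf{i}}$ with reduct $\omega^{<S}$.

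The main obstacle is $(2)\Rightarrow(1)$: extending a lower stable model to a stable model of the \emph{un}-intervened $\mathcal{P}$. Here $S$ is not fixed but must obey its own clauses, which in a cyclic program may depend on descendants of $S$, so $V:=S\cup\mathfrak{P}^{>S}$ is a genuinely coupled block and a single appeal to Principle~\ref{principle - causal irrelevance} (which fixes $S$ and solves only for $\mathfrak{P}^{>S}$) is not available. I would resolve this by a well-founded recursion along the condensation of the dependency graph restricted to $V$, processing the strongly connected components $K$ in cause-before-effect order. When $K$ is reached, all of its causes outside $K$ are already assigned (they lie in $\mathfrak{P}^{<S}$ or in earlier components), so $K$'s clauses evaluate to a local program $Q_K$ on the atoms of $K$, for which I must exhibit a stable model. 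If $K$ has at least one cause outside $K$, I apply Formalization~\ref{formalization - causal irrelevance} to the set $S':=\mathfrak{P}\setminus(K\cup\mathfrak{P}^{>K})$: one checks that $\mathfrak{P}^{>S'}=K\cup\mathfrak{P}^{>K}$ and that $K$ is a source component of $\mathfrak{P}^{>S'}$ (its external causes, being no descendants of $K$, land in $S'$), so any stable model of $\textbf{P}^{>S'}\cup\sigma$ — with $\sigma$ carrying the already-fixed cause values — restricts on $K$ to a stable model of $Q_K$. If instead $K$ is causally autonomous, then $Q_K$ has no external input and consistency of $\mathcal{P}$ (part of satisfying Principle~\ref{principle - causal irrelevance}) furnishes a stable model $\omega^{\ast}$ whose restriction to the source component $K$ already solves $Q_K$. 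Assembling the component-wise solutions along the splitting sets yields a stable model $\omega$ of $\mathcal{P}$ with $\omega\cap\mathfrak{P}^{<S}=\omega^{<S}$, which establishes $(2)\Rightarrow(1)$ and closes the cycle $(1)\Rightarrow(2)\Rightarrow(3)\Rightarrow(2)\Rightarrow(1)$.

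I expect the delicate points to be, first, checking that restriction and gluing along $\mathfrak{P}^{<S}$ are faithful at the level of unfounded sets rather than merely supported models, and, second, the bookkeeping in the recursion that verifies $\mathfrak{P}^{>S'}=K\cup\mathfrak{P}^{>K}$ with $K$ sitting as a source component, so that Principle~\ref{principle - causal irrelevance} can be invoked verbatim. The split into the ``$K$ has an external cause'' case (solved by Principle~\ref{principle - causal irrelevance}) and the causally autonomous case (solved by consistency of $\mathcal{P}$) is the subtlety I would be most careful to isolate, since it is exactly the coupling between $S$ and its descendants that makes $(2)\Rightarrow(1)$ harder than the intervened counterpart $(2)\Rightarrow(3)$.
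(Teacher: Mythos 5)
Your proposal is sound in substance, but on the one genuinely hard direction, $(2)\Rightarrow(1)$, it takes a markedly heavier route than the paper, and it does so because of a false premise. For $(1)\Rightarrow(2)$, $(2)\Rightarrow(3)$ and $(3)\Rightarrow(2)$ you do essentially what the paper does: the paper invokes the splitting lemma of Lifschitz and Turner where you re-derive its restriction direction by hand from unfounded sets, and your extension step for $(2)\Rightarrow(3)$ --- Formalization~\ref{formalization - causal irrelevance} applied to $S$ itself, together with the observations that $\mathcal{P}^{<S}=\mathcal{P}_{\mathbf{i}}^{<S}$ and that after intervention $S$ carries only facts --- is exactly the paper's argument. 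The divergence is in $(2)\Rightarrow(1)$: your claim that ``a single appeal to Principle~\ref{principle - causal irrelevance} (which fixes $S$ and solves only for $\mathfrak{P}^{>S}$) is not available'' is mistaken, because the Formalization quantifies over \emph{all} subsets of $\mathfrak{P}$, not just $S$. The paper applies it once to the complement set $S':=\mathfrak{P}^{<S}$: since $S'$ is closed under predecessors, the clauses whose heads lie neither in $S'$ nor in $\mathfrak{P}^{>S'}$ form a self-contained program on a vocabulary disjoint from $S'$; consistency of $\mathcal{P}$ gives that middle program a stable model, the disjoint-vocabulary union glues it to $\omega^{<S}$, and a single application of the Formalization to $S'$ then extends the result over all of $\mathfrak{P}^{>S'}$ at once. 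Your SCC-by-SCC recursion performs the same manoeuvre locally --- your per-component sets $S':=\mathfrak{P}\setminus(K\cup\mathfrak{P}^{>K})$ are precisely the complement trick, and your case split (Formalization for components with an external cause, consistency for autonomous ones) mirrors the paper's split into $\mathfrak{P}^{>S'}$ versus the middle block --- and I believe it can be made rigorous, but it incurs obligations the paper avoids: you must fix a topological order on the condensation, pad $\sigma$ with arbitrary values on components incomparable to $K$ (the Formalization requires a \emph{complete} $\mathfrak{P}\setminus\mathfrak{P}^{>S'}$-structure, and such components are not yet assigned when $K$ is processed --- a gap your sketch does not address, though it is harmless since only the parents of $K$ influence $Q_K$), verify $\mathfrak{P}^{>S'}=K\cup\mathfrak{P}^{>K}$ and the source-component property for each $K$, and assemble everything along a splitting sequence rather than one binary split. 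In short: your argument is a correct but needlessly iterated version of the paper's; applying the principle globally to $\mathfrak{P}^{<S}$ collapses your entire recursion into two steps.
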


\begin{proof}
    The proof rests on the splitting lemma \cite{LifschitzT94}. For any set of propositions $S \subseteq  \mathfrak{P}$ denote by $\mathfrak{P}^{\geq S} := \mathfrak{P}^{>S} \cup S$. Let $\mathcal{P}^S$ be the set of all clauses with heads in $S$, and write $\textbf{P}^{*S}$ for the set of all clauses with heads in $\mathfrak{P}^{*S}$, where $* \in \{<,\geq,>\}$. Finally, set $\mathcal{P}^{\ast S} := (\textbf{P}^{\ast S}, \mathfrak{A}, \emptyset)$.

    Let $\omega$ be a world. By abuse of notation, for an abductive logic program ${\mathcal{Q} := (Q, \mathfrak{B}, \emptyset)}$ we denote the logic program $Q \cup (\omega \cap \mathfrak{B})$ also by $\mathcal{Q}$.
    
    We first show the equivalence of 2 and 3, that stable models of~$\mathcal{P}^{<S}$ are precisely the reducts of stable models of $\mathcal{P}_\mathbf{i}$.
    Note first that  $\mathcal{P}^{< S} = \mathcal{P}_{\mathbf{i}}^{ < S}$. 
    Now consider the splitting $(\mathcal{P}_{\mathbf{i}}^{< S},\mathcal{P}_{\mathbf{i}}^{S},\mathcal{P}_{\mathbf{i}}^{> S})$. 
    This is indeed a splitting, since after intervention the propositions in $S$ have no ancestors at all. 
    Therefore, every reduct of a stable model of $\mathcal{P}_{\mathbf{i}}$ to $\mathfrak{P}^{< S}$ is a stable model of $\mathcal{P}_{\mathbf{i}}^{< S} =\mathcal{P}^{< S}$. 
    For the other direction, let $\omega$ be a stable model of $\mathcal{P}^{< S} = \mathcal{P}_{\mathbf{i}}^{ < S}$. 
    Since $\mathcal{P}_\mathbf{i}^{> S,\omega}$ consists only of facts, it clearly has a stable model, say~$\omega_S$. By assumption, the program $\mathcal{P}^{>S,\omega_S} = \mathcal{P}_\mathbf{i}^{>S,\omega_S}$  has a stable model, which by the splitting lemma is also a stable model of $\mathcal{P}_\mathbf{i}$ and extends $\omega$.  

    Now we turn to the equivalence of 1. and 2. 
    Note first that for any $S$, $(\mathcal{P}^{< S},\mathcal{P}^{\geq S})$ is a splitting of $\mathcal{P}$. Therefore, every reduct of a stable model of $\mathcal{P}$ to $\mathfrak{P}^{< S}$ is a stable model of $\mathcal{P}^{< S}$. 

    So let $\omega$ be such a stable model of $\mathcal{P}^{< S}$. We need to show that $\omega$ can be extended to a stable model of $\mathcal{P}$. 
    Denote $\mathfrak{P}^{< S}$ by $S'$. 
    We note that $S'$ is closed under predecessors since $\mathfrak{P}^{\geq S}$ is clearly closed under successors. 
    We employ the splitting $(\mathcal{P}^{S'},\mathcal{P}\setminus (\mathcal{P}^{S'} \cup \mathcal{P}^{>S'}), \mathcal{P}^{>S'})$.
    Note that the vocabularies used in $\mathcal{P}^{S'}$ and in $\mathcal{P}\setminus (\mathcal{P}^{S'} \cup \mathcal{P}^{>S'})$ are disjoint, since if a head occurring in $\mathcal{P}\setminus (\mathcal{P}^{S'} \cup \mathcal{P}^{>S'})$  would be a successor of a proposition in $S'$, it would lie in $\mathfrak{P}^{>S'}$, and $S'$ is closed under predecessors. 
    Since if the vocabularies of two logic programs are disjoint, the stable models of their union are precisely the unions of their stable models and $\mathcal{P}$ is consistent (therefore $\mathcal{P}\setminus (\mathcal{P}^{S'} \cup \mathcal{P}^{>S'})$ has at least one stable model),  every stable model of $\mathcal{P}^{S'}$ extends to a stable model of $\mathcal{P}\setminus  \mathcal{P}^{>S'}$. 
    The result now follows immediately from Formalization \ref{formalization - causal irrelevance}.~$\square$
\end{proof}

Finally, we obtain that stratified abductive logic programs under the stable model semantics satisfy Principle \ref{principle - causal irrelevance}.

\begin{theorem}
    Every stratified abductive logic program satisfies Principle~\ref{principle - causal irrelevance}.
    \label{theorem - stratified programs satisfy causal irrelevance}
\end{theorem}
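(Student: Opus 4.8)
The plan is to reduce the claim to Theorem~\ref{theorem - stable model semantics of stratified programs}, which furnishes a unique stable model for every stratified program. By Formalization~\ref{formalization - causal irrelevance} it suffices to show that for every $S \subseteq \mathfrak{P}$ and every $\mathfrak{P} \setminus \mathfrak{P}^{>S}$-structure $\omega$ the program $\textbf{P}^{>S, \omega} = \textbf{P}^{>S} \cup \omega$ has at least one stable model. I would obtain this by establishing the stronger syntactic fact that $\textbf{P}^{>S, \omega}$ is itself stratified, after which Theorem~\ref{theorem - stable model semantics of stratified programs} applies verbatim and even yields uniqueness.

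The key step is a comparison of dependence graphs. First I would note that $\textbf{P}^{>S}$ consists of clauses of $\textbf{P}$, so each edge of $\graph(\textbf{P}^{>S})$ stems from a clause of $\textbf{P}$ and hence already occurs, with the same sign, in $\graph(\textbf{P})$. Second, the propositions of $\omega$ are adjoined as facts, that is, as clauses with empty body, which contribute no edges to the dependence graph whatsoever (and whose heads, lying in $\mathfrak{P} \setminus \mathfrak{P}^{>S}$, are disjoint from the heads $\mathfrak{P}^{>S}$ of $\textbf{P}^{>S}$). Consequently the edge set of $\graph(\textbf{P}^{>S, \omega})$ is contained, with signs preserved, in the edge set of $\graph(\textbf{P})$. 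As $\textbf{P}$ is stratified, $\graph(\textbf{P})$ contains no cycle through a negative edge, so the same holds for its subgraph $\graph(\textbf{P}^{>S, \omega})$; thus $\textbf{P}^{>S, \omega}$ is stratified.

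Applying Theorem~\ref{theorem - stable model semantics of stratified programs} to the stratified program $\textbf{P}^{>S, \omega}$ then provides the required (indeed unique) stable model, which is exactly the condition of Formalization~\ref{formalization - causal irrelevance}. The standing consistency hypothesis of that formalization is automatic for stratified programs: for any choice $\epsilon \subseteq \mathfrak{A}$ of abducibles the program $\textbf{P} \cup \epsilon$ is again stratified, hence has a unique stable model by Theorem~\ref{theorem - stable model semantics of stratified programs}, and since no abducible is the head of a clause of $\textbf{P}$ this model has explanation exactly $\epsilon$.

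I do not anticipate a substantive obstacle; the entire weight of the argument rests on the elementary bookkeeping that passing from $\textbf{P}$ to the subprogram $\textbf{P}^{>S}$ and adjoining the empty-bodied facts $\omega$ can only delete edges from the dependence graph and never reverse the sign of a surviving edge. The only point demanding mild care is to confirm that these operations genuinely cannot introduce a new cycle or a new negative edge, which is immediate once the graphs are compared edge by edge.
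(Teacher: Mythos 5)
Your proof is correct and follows essentially the same route as the paper's: the paper's (very terse) argument likewise observes that $\textbf{P}^{>S,\omega}$ inherits stratification from $\textbf{P}$ and then invokes Theorem~\ref{theorem - stable model semantics of stratified programs} to obtain the required (unique) stable model. Your explicit subgraph comparison of dependence graphs and your remark that stratified programs are automatically consistent merely fill in details the paper leaves implicit.
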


\begin{proof}
    If $\mathcal{P} := (\textbf{P}, \mathfrak{A}, \textbf{IC})$ is stratified, then so is $\textbf{P}^{>S,\omega}$ for every $\textbf{S} \subseteq \mathfrak{P}$. 
    Therefore, $\textbf{P}^{>S,\omega}$ has (precisely) one stable model.~$\square$ 
\end{proof} 

\section{Conclusion and Related work}

Let $\mathcal{P} := (\textbf{P}, \mathfrak{A}, \textbf{IC})$ be an abductive logic program. This work proposes a causal interpretation of $\mathcal{P}$ that reads each clause ${h \leftarrow b_1, \ldots, b_n \in \textbf{P}}$ as ``$b_1 \land \cdots \land b_n$ \textit{causes} $h$,'' interprets the abducibles $\mathfrak{A}$ as external premises with explanations beyond the given model, and treats the integrity constraints $\textbf{IC}$ as observations. Theorem~\ref{theorem - bochman transformation}, together with the formalizations in Definition~\ref{definition - causal system}, shows that Principles~\ref{principle - causal foundation}–\ref{principle - Leibniz} entail the stable model semantics~\cite{StableModelSemantics}.

However, Example~\ref{example - conclude against causal direction} illustrates that general programs may violate Principle~\ref{principle - causal irrelevance}. Theorem~\ref{theorem - stratified programs satisfy causal irrelevance} confirms that stratified programs respect this principle, supporting their causal interpretability under the stable model semantics.

In the absence of observations ($\textbf{IC} = \emptyset$), Theorem~\ref{theorem - non-interference} shows that Principle~\ref{principle - causal irrelevance} implies~\ref{principle - non-interference}. Hence, stratified programs without integrity constraints support reliable predictions under interventions.

Several authors have explored the relation between causal logic and logic programming. To our knowledge, the earliest results appeared in the context of causally enriched versions of the situation calculus \cite{Lin95,McCainT97,GiunchigliaL98}. McCain~\cite{McCain97} and Lin and Wang~\cite{LinW99} translate causal constraints of such languages into disjunctive logic programs with classical negation. McCain's transformation~\cite{McCain97} is extended to a broader class of causal theories, including first-order ones, by Ferraris et al.~\cite{FerrarisLLLY12}. This line of work relies on classical negation, thus departing from the standard framework of negation as failure. Its aim is to make causal theories executable, rather than to provide a causal semantics for logic programs.

Conversely, some authors have investigated how logic programs themselves might admit a causal interpretation. This includes work by Giunchiglia et al.~\cite{GiunchigliaLLMT04} and Bochman~\cite{Bochman04}, who, under the stable model semantics, translate a logic programming clause of the form~$c \leftarrow \vec{a}, \neg \vec{b}$ into the causal rule
$
 \bigwedge \neg \vec{b} \Rightarrow \left( \bigwedge \vec{a} \rightarrow c \right).
$
Compared to the Bochman transformation in Definition~\ref{definition - Bochman transformation}, which yields the rule $\vec{a}, \neg \vec{b} \Rightarrow c$, this formulation is more difficult to interpret. In particular, as noted by Eelink et al.~\cite[§2.2.2]{EelinkRW25}, the use of embedded logical implication within a causal reasoning framework is far less transparent than the use of ordinary propositions. This formulation also implies that logic programs correspond only to causal rules of a highly specific syntactic form. From this perspective, logic programs could not express causal dependencies between positive atoms, which would severely limit their causal expressiveness. Moreover, to our knowledge, these works do not address the feasibility of modeling interventions, nor do they consider Principles~\ref{principle - causal foundation}, \ref{principle - causal irrelevance} and~\ref{principle - non-interference}.

The Bochman transformation in Definition~\ref{definition - Bochman transformation} maps facts to rules of the form~${\top \Rightarrow h}$, whose interpretation remains open~\cite[Remark 1.1]{EelinkRW25}. Future work should investigate suitable causal readings of such rules, identify the class of programs satisfying Principle~\ref{principle - causal irrelevance}, and extend the framework to disjunctive programs. 
While this work addresses whether a program can meaningfully represent the effect of all interventions, it remains to be explored when a program contains sufficient information to represent particular interventions, and whether Principle \ref{principle - non-interference}, appropriately formalized, is in fact equivalent to Principle \ref{principle - causal irrelevance}.

%
%
%
 \bibliographystyle{splncs04}
 \bibliography{literature}
%




\end{document}